\documentclass[a4paper]{article}
\usepackage[utf8]{inputenc}

\usepackage[a4paper,top=3cm,bottom=2cm,left=3cm,right=3cm,marginparwidth=1.75cm]{geometry}

\usepackage{amsmath,amsfonts,dsfont,amsthm}
\usepackage{graphicx}
\usepackage[colorinlistoftodos]{todonotes}
\usepackage[colorlinks=true, allcolors=blue]{hyperref}
\usepackage{float}
\usepackage{multirow}

\usepackage{microtype}
\usepackage{subfigure}
\usepackage{booktabs}
\usepackage{algorithm}
\usepackage{algorithmic}

\usepackage{hyperref}

\newcommand\numberthis{\addtocounter{equation}{1}\tag{\theequation}}

\newtheorem{thm}{Theorem}

\newtheorem{rem}{Remark}

\allowdisplaybreaks

\title{AutoShuffleNet: Learning Permutation Matrices \\via an Exact Lipschitz Continuous Penalty \\in Deep Convolutional Neural Networks}
\author{Jiancheng Lyu $^{\dagger}$, \and Shuai Zhang $^*$,\and Yingyong Qi \thanks{Qualcomm AI Research, San Diego, CA 92121. Email: (shuazhan,yingyong)@qti.qualcomm.com.} , \and Jack Xin
\thanks{Department of Mathematics, UC Irvine, Irvine, CA 92697.
Email: (jianchel, jack.xin)@uci.edu.}
}
\date{}

\begin{document}

\maketitle

\vskip 0.3in

\begin{abstract}
ShuffleNet is a state-of-the-art light weight convolutional neural network architecture. Its basic operations include group, channel-wise convolution and channel shuffling. However, channel shuffling is manually designed empirically. Mathematically, shuffling is a multiplication by a permutation matrix. In this paper, we propose to automate channel shuffling by learning permutation matrices in network training. We introduce an exact Lipschitz continuous non-convex penalty so that it can be incorporated in the stochastic gradient descent to approximate permutation at high precision. Exact permutations are obtained by simple rounding at the end of training and are used in inference. The resulting network, referred to as AutoShuffleNet, achieved improved classification accuracies on CIFAR-10 and ImageNet data sets. In addition, we found experimentally that the standard convex relaxation of permutation matrices into stochastic matrices leads to poor performance. We prove theoretically the exactness (error bounds) in recovering permutation matrices when our penalty function is zero (very small). We present examples of permutation optimization through graph matching and two-layer neural network models where the loss functions are calculated in closed  analytical form. In the examples, convex relaxation failed to capture permutations whereas our penalty succeeded.  

\end{abstract}

\section{Introduction}
Light convolutional deep neural networks (LCNN) are attractive in resource limited conditions for delivering high performance at low costs. Some of the state-of-the-art LCNNs in computer vision are ShuffleNet (\cite{shufflev1}, \cite{shufflev2}), IGC (Interleaved Group Convolutions, \cite{IGC}) and IGCV3 (Interleaved Low-Rank Group Convolutions,\cite{IGCV}). A noticeable feature in the design is the presence of permutations for channel shuffling in between separable convolutions. The permutations are hand-crafted by designers outside of network training however. A natural question is whether the permutations can be learned like network weights so that they are optimized based on training data. An immediate difficulty is that unlike weights, permutations are highly discrete and incompatible with the stochastic gradient descent (SGD) methodology that is continuous in nature. To overcome this challenge, we introduce an exact Lipschitz continuous non-convex penalty so that it can be incorporated in SGD to approximate permutation at high precision. Consequently, exact permutations are obtained by simple rounding at the end of network training with negligible drop of classification accuracy. To be specific, we shall work with ShuffleNet architecture (\cite{shufflev1}, \cite{shufflev2}). Our approach extends readily to other LCNNs.

{\bf Related Work.} Permutation optimization is a long standing problem arising in operations research, graph matching among other applications \cite{KB_57,Burk_13}. Well-known examples are linear and quadratic assignment problems \cite{vog}. Graph matching is a special case of quadratic assignment problem, and can be formulated over $N\times N$ permutation matrices $\mathcal{P}^N$ as: $\min_{\pi \in \mathcal{P}^N} \| A - \pi B \pi^T \|^{2}_{F}$, where $A$ and $B$ are the adjacency matrices of graphs with $N$ vertices, and $\|\cdot\|_F$ is the Frobenius norm. A popular way to handle $\mathcal{P}^N$ is to relax it to the Birkhoff polytope $\mathcal{D}^N$, the convex hull of $\mathcal{P}^N$, leading to a convex relaxation. The problem may still be non-convex due to the objective function. The explicit realization of $\mathcal{D}^N$ is the set of doubly stochastic matrices $\mathcal{D}^N =\{M\in \mathbf{R}^{N\times N}: M\mathbf{1}=\mathbf{1}, M^T \mathbf{1} = \mathbf{1}, M \geq 0 \}$, where $\mathbf{1}=(1,1,,\cdots,1)^T \in \mathbf{R}^{N}$. An approximate yet simpler way to treat $\mathcal{D}^N$ is through the classical first order matrix scaling algorithm e.g. the Sinkhorn method \cite{Sinkhorn}. Though in principle such algorithm converges, the cost can be quite high when   iterating many times, which causes a bottleneck effect \cite{wright_16}. A non-convex and more compact relaxation of $\mathcal{P}^N$ is by a sorting network \cite{wright_16} which maps the box $[0,1]^N$ into a manifold that sits inside $\mathcal{D}^N$ and contains $\mathcal{P}^N$. Yet another method is path following algorithm \cite{path_09} which seeks solutions under concave relaxations of $\mathcal{P}^N$ by solving a linear interpolation of convex-concave problems (starting from the convex relaxation). None of the existing relaxations are exact. 

{\bf Contribution.} Our non-convex relaxation is a combination of matrix $\ell_{1-2}$ penalty function and $\mathcal{D}^N$. The $\ell_{1-2}$ (the difference of $\ell_1$ and $\ell_2$ norms) has been proposed and found effective in selecting sparse vectors under nearly degenerate linear constraints \cite{ELX_13,DL12}. The matrix version is simply a sum of $\ell_{1-2}$ over all row and column vectors. We prove that {\it the penalty is zero when applied to a matrix in $\mathcal{D}^N$ if and only if the matrix is a permutation matrix.} Thanks to the $\mathcal{D}^N$ constraint, the penalty function is Lipschitz continuous (almost everywhere differentiable). This allows the penalty to be integrated directly into SGD for learning permutation in LCNNs. As shown in our experiments on CIFAR-10 and Imagenet data sets, the closeness to $\mathcal{P}^N$ turns out to be remarkably small at the end of network training so that a simple rounding has negligible effect on the validation accuracy. We also found that convex relaxation by $\mathcal{D}^N$ fails to capture good permutations for LCNNs. To our best knowledge, this is {\it the first time permutations have been successfully learned on deep CNNs to improve hand-crafted permutations}.

{\bf Outline.} In section 2, we introduce our exact permutation penalty, and prove the closeness to permutation matrices when the penalty values are small as observed in the experiments. We also present the training algorithm combining thresholding and matrix scaling to approximate projection onto $\mathcal{P}^N$ for SGD. In section 3, we analyze three permutation optimization problems to show the necessity of our penalty. In a 2-layer neural network regression model with short cut (identify map), convex relaxation does not give the optimal permutation even with additional rounding while our penalty can. In section 4, we show experimental results on consistent improvement of auto-shuffle over hand-crafted shuffle on both CIFAR-10 and Imagenet data sets. Conclusion is in section 5.

\section{Permutation, Matrix \texorpdfstring{$\ell_{1-2}$}{Lg} Penalty and Exact Relaxation}

The channel shuffle operation in ShuffleNet \cite{shufflev1,shufflev2} can be represented as multiplying the feature map in the channel dimension by a permutation matrix $M$. The permutation matrix $M$ is a square binary matrix with exactly one entry of one in each row and each column and zeros elsewhere. In the ShuffleNet architecture \cite{shufflev1,shufflev2}, $M$ is preset by the designers and will be called ``manual''. In this work, we propose to learn an automated permutation matrix $M$ through network training, hence removing the human factor in its selection towards a more optimized shuffle. Since permutation is discrete in nature and too costly to enumerate, we propose to approach it by adding a matrix generalization of the $\ell_{1-2}$ penalty \cite{ELX_13,DL12} to the network loss function in the stochastic gradient descent (SGD) based training.

Specifically for $M = \left(m_{ij}\right) \in \mathbf{R}^{N\times N}$, the proposed continuous matrix penalty function is
\begin{align}\label{DL12}
    P\left(M\right) := \sum_{i=1}^N\left[\sum_{j=1}^N\left|m_{ij}\right|-\left(\sum_{j=1}^Nm_{ij}^2\right)^{1/2}\right]+\sum_{j=1}^N\left[\sum_{i=1}^N\left|m_{ij}\right|-\left(\sum_{i=1}^Nm_{ij}^2\right)^{1/2}\right],
\end{align}
in conjunction with the doubly stochastic constraint: 
\begin{equation}{\label{ct}}
m_{ij}\geq 0,\; \forall (i,j); \; \; \sum_{i=1}^{N}\, m_{ij} = 1,\; \forall \, j; \; \sum_{j=1}^{N} \, m_{ij} = 1, \; \forall \, i.
\end{equation}


\begin{rem}
When the constraints in \eqref{ct} hold, $\sum_{j=1}^N\left|m_{ij}\right|$ and 
$\sum_{i=1}^N\left|m_{ij}\right|$ in $P\left(M\right)$ can be removed. However, in actual computation, the two equality constraints of \eqref{ct} only hold approximately, so the full expression in \eqref{DL12} is necessary.
\end{rem}

\begin{rem}
Thanks to \eqref{ct}, we see that the penalty function $P(M)$ is actually Lipschitz continuous in $M$ as $\sum_{j=1}^N\,  m_{ij}^{2}\not =0$, $\forall i$, and $\sum_{i=1}^N\, m_{ij}^{2} \not =0$, 
$\forall j$. 
\end{rem}

\begin{thm}
A square matrix $M$ is a permutation matrix if and only if $P(M)=0$, and the doubly stochastic constraint \eqref{ct} holds. 
\end{thm}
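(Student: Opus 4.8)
The plan is to reduce the statement to the elementary inequality that for any vector $v\in\mathbf{R}^n$ one has $\|v\|_1\ge\|v\|_2$, with equality precisely when $v$ has at most one nonzero coordinate. To get the equality case I would write $\|v\|_1^2=\sum_k v_k^2+\sum_{k\neq\ell}|v_k|\,|v_\ell|=\|v\|_2^2+\sum_{k\neq\ell}|v_k|\,|v_\ell|$, so that $\|v\|_1^2-\|v\|_2^2$ is a sum of nonnegative cross terms, which vanishes if and only if at most one $|v_k|$ is nonzero. Consequently each of the $2N$ bracketed expressions in \eqref{DL12} --- one for every row, one for every column of $M$ --- is nonnegative, and a given bracket is zero exactly when the corresponding row (resp.\ column) has at most one nonzero entry. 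In particular $P(M)\ge 0$ for every $M$.

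For the ``only if'' direction: if $M$ is a permutation matrix, then every row and every column of $M$ equals a standard basis vector, so each bracket in \eqref{DL12} is $1-1=0$ and hence $P(M)=0$; moreover a permutation matrix is doubly stochastic, so \eqref{ct} holds.

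For the ``if'' direction: assume $P(M)=0$ together with \eqref{ct}. Since $P(M)$ is a sum of nonnegative terms equal to zero, every bracket vanishes; by the equality case above, each row of $M$ has at most one nonzero entry. Combining this with the row-sum constraint $\sum_j m_{ij}=1$ and the nonnegativity $m_{ij}\ge 0$ in \eqref{ct}, each row has exactly one nonzero entry, and that entry is forced to equal $1$. Thus $M$ is a binary matrix with exactly $N$ ones, and the column-sum constraints $\sum_i m_{ij}=1$ then force each column to contain exactly one $1$ as well, so $M$ is a permutation matrix.

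The argument is essentially routine; the only point that needs a little care is this last direction, where one must combine the equality case of $\|v\|_1\ge\|v\|_2$ with the positivity and the normalization in \eqref{ct} to pin the surviving entry in each row down to exactly $1$ rather than merely nonzero, after which the column-sum constraint only does the counting. I would state the scalar $\ell_{1-2}$ equality fact first as a one-line lemma and then dispatch both implications as above. I note in passing that the row brackets of $P$ alone already suffice given \eqref{ct}; the column brackets are retained for symmetry and for robustness when the equality constraints in \eqref{ct} hold only approximately.
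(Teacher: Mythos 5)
Your proposal is correct and follows essentially the same route as the paper: both rest on the equality case of the rowwise/columnwise $\ell_1\ge\ell_2$ inequality, obtained by expanding the square of the $\ell_1$ norm and forcing the cross terms $|m_{ij}m_{ij'}|$ to vanish, and then invoking the constraints \eqref{ct}. Your final step is slightly more explicit than the paper's (you handle the ``at most one nonzero'' equality case, pin the surviving entry to $1$ via the row sums, and do the column counting, whereas the paper concludes tersely ``in view of \eqref{ct}''), and your closing observation that the row brackets alone suffice under \eqref{ct} matches the paper's own Remark 1.
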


\begin{proof} ($\Rightarrow$) Since a permutation matrix consists of columns (rows) with exactly one entry of 1 and the rest being zeros,  each term inside the outer sum of $P(M)$ equals zero, and clearly (\ref{ct}) holds. 
 
($\Leftarrow$) By the Cauchy-Schwarz inequality, 
\[ \left ( \sum_{j=1}^N\left|\, m_{ij}\, \right|\right ) -\left(\sum_{j=1}^N m_{ij}^2\right)^{1/2} \geq 0, \;\; \forall i,\]
with equality if and only if the row-wise cardinalty is 1:
\begin{equation}\label{card1}
|\, \{j: m_{ij} \not = 0\}\, | = 1, \;\; \forall i.
\end{equation}
This is because the mixed product terms like $2\, |m_{ij}\, m_{ij'}|$ ($j\not = j'$)  in $(\, \sum_{j=1}^N\left|\, m_{ij}\, \right|\, )^2$ must all be zero to match $\sum_{j=1}^N \, m_{ij}^2$. This only happens when equation (\ref{card1}) is true.  Likewise,
\[\sum_{i=1}^N\left|\,m_{ij}\right|-\left(\sum_{i=1}^Nm_{ij}^2\right)^{1/2} \geq 0, \;\; \forall j,\]
with equality if and only if
$ | \, \{i: m_{ij} \not = 0 \}\, | = 1, \;\; \forall j $.
In view of (\ref{ct}), $M$ is a permutation matrix.
\end{proof}

The non-negative constraint in (\ref{ct}) is maintained throughout SGD by thresholding $m_{ij} \rightarrow \max(m_{ij},0)$. The normalization conditions in (\ref{ct}) are implemented sequentially once in SGD iteration. Hence they are not strictly enforced. In theory, if the column normalization (divide each column by its sum) and row normalization (divide each row by its sum) are iterated sufficiently many times, the resulting matrices converge to (\ref{ct}). This is known as the Sinkhorn process or RAS method \cite{Sinkhorn}, which is a first order method to approximately solve the so called matrix scaling problem (MSP).  Simply state, the MSP for a given non-negative real matrix $A\in \mathbf{R}^{N\times N}$ is to scale its rows and columns (i.e. multiply each by a non-negative constant) to realize the prescribed row sums and column sums. The approximate MSP is: given tolerance $\epsilon > 0$, find positive diagonal matrices $X$ and $Y$ such that $ | \, X A Y \mathbf{1} - \mathbf{1}\, | \leq \epsilon, \; |\, \mathbf{1}^{T} X A Y - \mathbf{1}^{T} \, | \leq \epsilon. $
For a historical account of MSP and a summary of various algorithms to date, see \cite{ZW17} and Table 1 therein. The RAS method is an alternate minimization procedure with convergence guarantees. Each iteration of the RAS method costs complexity $O(m)$, $m$ being the number of non-zero entries 
in $A$. If the entries of $A$ are polynomially bounded (which is the case during network training due to the continuous nature of SGD), the RAS method converges in $\tilde{O}(N/\epsilon^2)$ iterations \cite{RAScomplex}, giving total complexity $\tilde{O}(m N/\epsilon^2)$, where tilde hides logarithmic factors in $N$ and $\epsilon^{-1}$. Improvements of complexity bounds via  minimizing the log of capacity and higher order methods can be found in \cite{ZW17}. However for our study, the first order method \cite{Sinkhorn} suffices for two reasons. One is that it is computationally low cost, the other is that the error in the matrix scaling step can be compensated in network weight adjustment during SGD. In fact, we did not find much benefit to iterate RAS method more than once in terms of enhancing validation accuracy. This is quite different from solving MAP as a stand alone task.  


The multiplication by $M$ can be embedded in the network as a $1\times1$ convolution layer with $M$ initialized as absolute value of a random Gaussian matrix. After each weight update, we threshold the weights to $\left[0,\infty\right)$, normalize rows to unit lengths, then repeat on columns. Let $L$ be the network loss function. The training minimizes the objective function:
\begin{equation}\label{obj}
f=f(w,M):= L(w) + \lambda \, \sum_{j=1}^{J} P(M_j),
\end{equation}
where $J$ is the total number of ``channel shuffles'' $M_j$'s abbreviated as $M$ , $w$ is the network weight, $\lambda$ a positive parameter. The training algorithm is summarized in Algorithm \ref{alg}. The $\ell_1$ term in the penalty function $P$ has standard sub-gradient, and the $\ell_2$ term is differentiable away from zero, which is guaranteed in the algorithm \ref{alg} due to continuity of SGD and the normalization in columns and rows. $\lambda$ is chosen to be $10^{-3}$ or $2\times10^{-3}$ so as to balance the contributions of the two terms in \eqref{obj} and drive $\sum_{j=1}^{J} P(M_j)$ close to $0$.


\begin{algorithm}
\caption{AutoShuffle Learning.}\label{alg}
\textbf{Input}: \\ mini-batch loss function $f_t(w,M)$, $t$ being the iteration index;\\
        learning rate $\eta^t$ for $(w,M)$;\\
        penalty parameter $\lambda$ for $P$;\\
        total iteration number $Tn$.\\    
        \textbf{Start}:\\
        $w$: sample from unit Gaussian distribution;\\
        $M$: sample from unit Gaussian distribution then take absolute value.\\
\textbf{WHILE $t<Tn$, DO}:
\begin{algorithmic}
    \STATE (1) Evaluate the mini-batch gradient $(\nabla_w f_t, \nabla_{M} f_t)$ at $(w^t, M^t)$;
    \STATE (2) $w^{t+1} = w^t - \eta^t \, \nabla_w f_t(w^t,M^t)$; \quad $//$ gradient update for weights
    \STATE (3) $M^{t+1} = M^t - \eta^t\,   \nabla_{M} f_t(w^t,M^t)$; 
    \quad $//$ gradient update for $M$
    \STATE (4) $M^{t+1} \leftarrow \max(M^{t+1},0)$; \quad $//$ thresholding to enforce non-negativity constraint
  \STATE (5) normalize each column of $M^{t+1}$ by dividing the sum of entries in the column; 
  \STATE (6) normalize each row of $M^{t+1}$ by dividing the sum of entries in the row. \\
 \textbf{END WHILE}\\
\textbf{Output}: $w^{Tn}$,$M^{Tn}$;
project each matrix $M_{j}^{Tn}$ inside $M^{Tn}$ to the nearest permutation matrix.
\end{algorithmic}
\end{algorithm}
\medskip

We shall see that the penalty $P$ indeed gets smaller and smaller during training. Here we show a theoretical bound on the distance to $\mathcal{P}^N$ when $P$ is small and (\ref{ct}) holds approximately. 

\begin{thm}
Let the dimension $N$ of a non-negative square matrix $M$ be fixed. If $P(M) = O(\epsilon)$, $\epsilon \ll 1$,  and the doubly stochastic constraints are satisfied to $O(\epsilon)$, then there exists a permutation matrix $P^*$ such that $\|\, M - P^*\, \| =O(\epsilon)$.
\end{thm}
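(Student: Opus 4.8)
The plan is to make the proof of the previous theorem quantitative, localizing it term by term. Since $P(M)$ is a sum of $2N$ nonnegative quantities, the hypothesis $P(M)=O(\epsilon)$ forces each of them to be $O(\epsilon)$; in particular, for every row $i$, $\bigl(\sum_j m_{ij}\bigr)-\bigl(\sum_j m_{ij}^2\bigr)^{1/2}=O(\epsilon)$ (the absolute values drop since $M$ is nonnegative), and symmetrically for every column. Writing $a_i:=\sum_j m_{ij}$ and $b_i:=(\sum_j m_{ij}^2)^{1/2}$, the approximate row-sum constraint from \eqref{ct} gives $a_i=1+O(\epsilon)$, hence $b_i=1+O(\epsilon)$ and $a_i+b_i=2+O(\epsilon)$. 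Multiplying $a_i-b_i=O(\epsilon)$ by $a_i+b_i$ yields $a_i^2-b_i^2=\sum_{j\neq j'} m_{ij}m_{ij'}=O(\epsilon)$, so, all summands being nonnegative, every pairwise product $m_{ij}m_{ij'}$ with $j\neq j'$ is $O(\epsilon)$; the same holds along columns.

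Next I would single out, in each row $i$, an index $j(i)$ attaining the largest entry. If two entries of row $i$ both exceeded $C\sqrt{\epsilon}$ for a suitable constant $C$, their product would violate the bound just derived, so all but at most one entry of row $i$ is $O(\sqrt{\epsilon})$; combined with $a_i=1+O(\epsilon)$ (and $N$ fixed, so the remaining $N-1$ entries sum to $O(\sqrt{\epsilon})$) this forces $m_{i,j(i)}=1+O(\sqrt{\epsilon})\ge \tfrac12$ for $\epsilon$ small. Now bootstrap: for $j\neq j(i)$ we have $m_{ij}\le 2\,m_{ij}\,m_{i,j(i)}=O(\epsilon)$; summing these $N-1$ terms and re-using $a_i=1+O(\epsilon)$ sharpens the pivot to $m_{i,j(i)}=1+O(\epsilon)$. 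Thus each row of $M$ equals, up to $O(\epsilon)$ entrywise error, a standard basis (row) vector $e_{j(i)}^T$.

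Then I would verify that $i\mapsto j(i)$ is a bijection of $\{1,\dots,N\}$. If $j(i_1)=j(i_2)$ with $i_1\neq i_2$, then column $j(i_1)$ contains two entries each $\ge 1-O(\epsilon)$, so its sum is $\ge 2-O(\epsilon)$, contradicting the approximate column-sum constraint $\sum_i m_{i,j(i_1)}=1+O(\epsilon)$ once $\epsilon$ is small enough. Hence $j(\cdot)$ is injective, therefore a permutation; let $P^*$ be the permutation matrix with $P^*_{i,j(i)}=1$. Finally $M$ and $P^*$ agree entrywise to $O(\epsilon)$ — the pivots satisfy $|m_{i,j(i)}-1|=O(\epsilon)$ and the off-pivot entries satisfy $|m_{ij}-0|=O(\epsilon)$ — so, $N$ being fixed, $\|M-P^*\|=O(\epsilon)$ in any fixed matrix norm, e.g. the Frobenius norm.

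The main obstacle is the refinement from the crude $O(\sqrt{\epsilon})$ estimate to the claimed $O(\epsilon)$ rate: the pairwise-product bound alone only controls entries at scale $\sqrt{\epsilon}$, and one genuinely needs the two-stage argument — first extract a constant lower bound on the pivot, then feed it back into the product bound — to obtain linear dependence on $\epsilon$. A secondary point is bookkeeping of constants: all implied constants depend on $N$ (through the $(N-1)$-fold sums and the Cauchy--Schwarz slack), which is why the statement fixes $N$; the dependence should be kept explicit if a bound uniform in $N$ were ever desired.
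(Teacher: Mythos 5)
Your proof is correct and follows essentially the same route as the paper's: localize $P(M)=O(\epsilon)$ to each row and column, convert the $\ell_1$--$\ell_2$ gap into $O(\epsilon)$ bounds on the pairwise products $m_{ij}m_{ij'}$, identify a dominant entry in each row, and use the approximate column sums to conclude closeness to a permutation matrix. The only differences are cosmetic: the paper lower-bounds the pivot directly by $1/N+O(\epsilon)$ via the row sum rather than your two-stage $\sqrt{\epsilon}$ bootstrap to a pivot $\geq 1/2$, and it leaves the injectivity of the pivot map implicit in combining the row and column information, a point you spell out explicitly.
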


\begin{proof} It follows from $P(M) = O(\epsilon)$ that 
\[ \left ( \sum_{j=1}^N\left|\, m_{ij}\, \right|\right ) -\left(\sum_{j=1}^N m_{ij}^2\right)^{1/2} =O(\epsilon), \;\; \forall i,\]
implying that:
\begin{equation}\label{approxn1}
|\, m_{ij}\, m_{ij'}\, |  = O(\epsilon), \;\; \forall j\not = j', \;\; \forall i. 
\end{equation}
On the other hand for $\forall i$:
\begin{equation}\label{approxn2}
\sum_{j=1}^{N}\, m_{ij} = 1 + O(\epsilon). 
\end{equation}
Let $j^* ={\rm argmax}_{j} |m_{ij}|$, at any $i$. It follows from (\ref{approxn2}) that 
\[ |\, m_{ij^*}| \geq 1/N + O(\epsilon), \]
and from (\ref{approxn1}) that 
\[ m_{ij'} = O(\epsilon), \;\;\; \forall j'\not = j^*.\]
Hence each row of $M$ is $O(\epsilon)$ close to a unit coordinate vector, with one entry near 1 and the rest near 0. Similarly from $\sum_{i=1}^N\left|\,m_{ij}\right|-\left(\sum_{i=1}^Nm_{ij}^2\right)^{1/2} = O(\epsilon), \;\; \forall j$, and $ \sum_{i=1}^{N}\, m_{ij} = 1 + O(\epsilon)$, we deduce that each column of $M$ is $O(\epsilon)$ close to a unit coordinate vector, with one entry near 1 and the rest near 0. Combining the two pieces of information above, we conclude that $M$ is $O(\epsilon)$ close to a permutation matrix.
\end{proof}

The learned non-negative matrix $M$ will be called a {\it relaxed shuffle} and will be rounded to the nearest permutation matrix to produce a final {\it auto shuffle}. Strictly speaking, this ``rounding'' involves finding the orthogonal projection to the set of permutation matrices, a problem called the linear assignment problem (LAP), see \cite{Kimm_15} and references therein. The LAP can be formulated as a linear program over the doubly stochastic matrices or constraints (\ref{ct}), and is solvable in polynomial time \cite{Kimm_15}. As we shall see later in Table \ref{tab:3}, the relaxed shuffle comes amazingly close to an exact permutation in network learning. Hence, it turns out unnecessary to solve LAP exactly, indeed a simple rounding will do. AutoShuffleNet units adapted from ShuffleNet v1 \cite{shufflev1} and ShuffleNet v2 \cite{shufflev2} are illustrated in Figs. \ref{unit1}-\ref{unit2}.

\begin{figure}[!ht]
\centering
\begin{minipage}[b]{0.47\textwidth}
\begin{center}
\centerline{
    \includegraphics[width=0.445\textwidth]{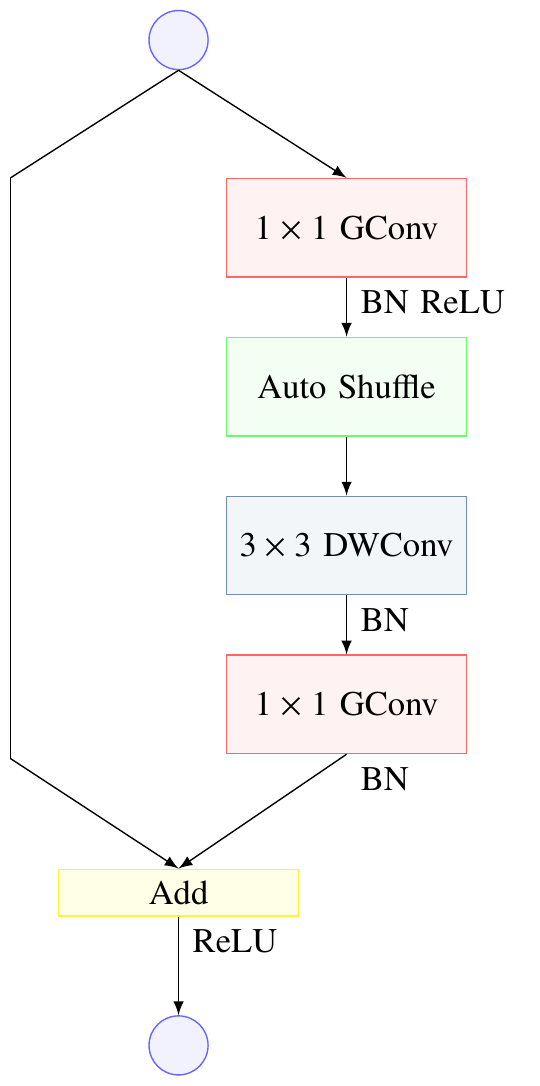}
    \includegraphics[width=0.555\textwidth]{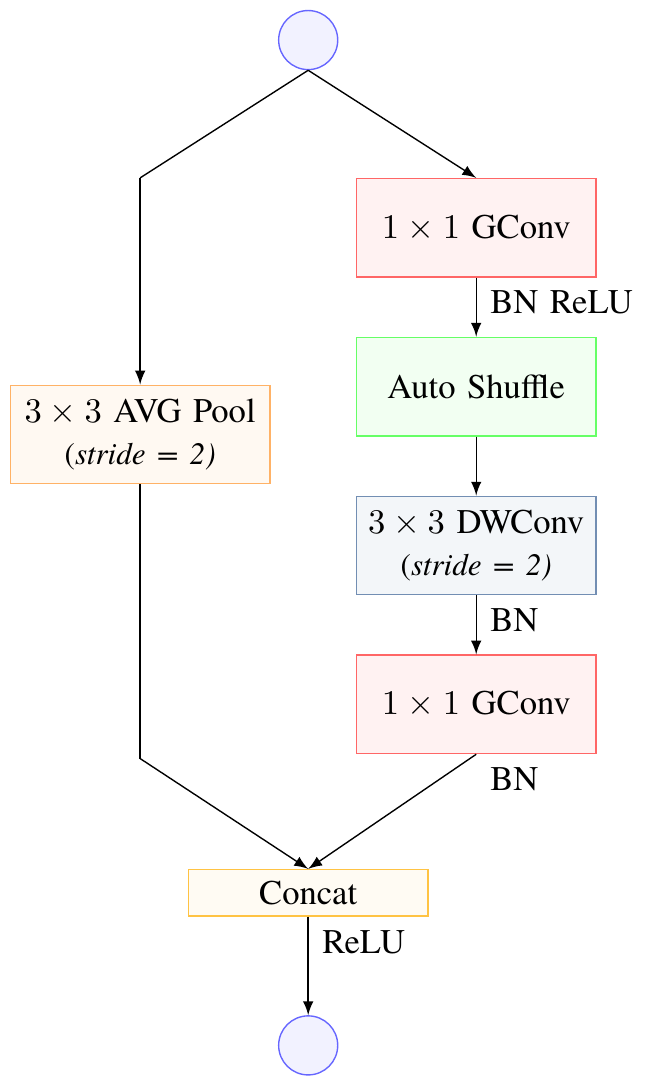}
    }
\caption{AutoShuffleNet units based on ShuffleNet v1.}
\label{unit1}
\end{center}
\end{minipage}
\hfill
\begin{minipage}[b]{0.47\textwidth}
\begin{center}
 \centerline{
    \includegraphics[width=0.45\textwidth]{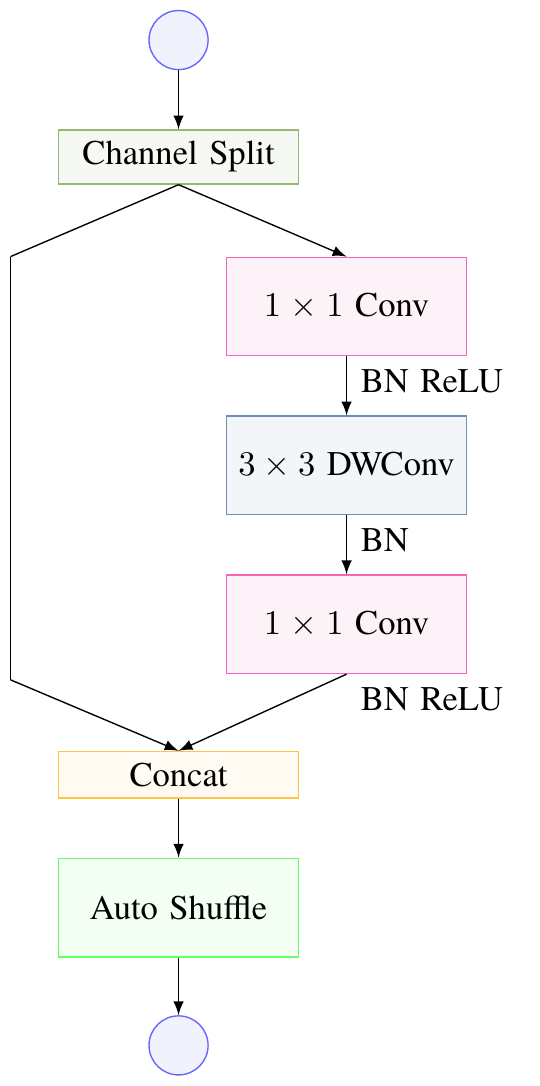}
    \includegraphics[width=0.55\textwidth]{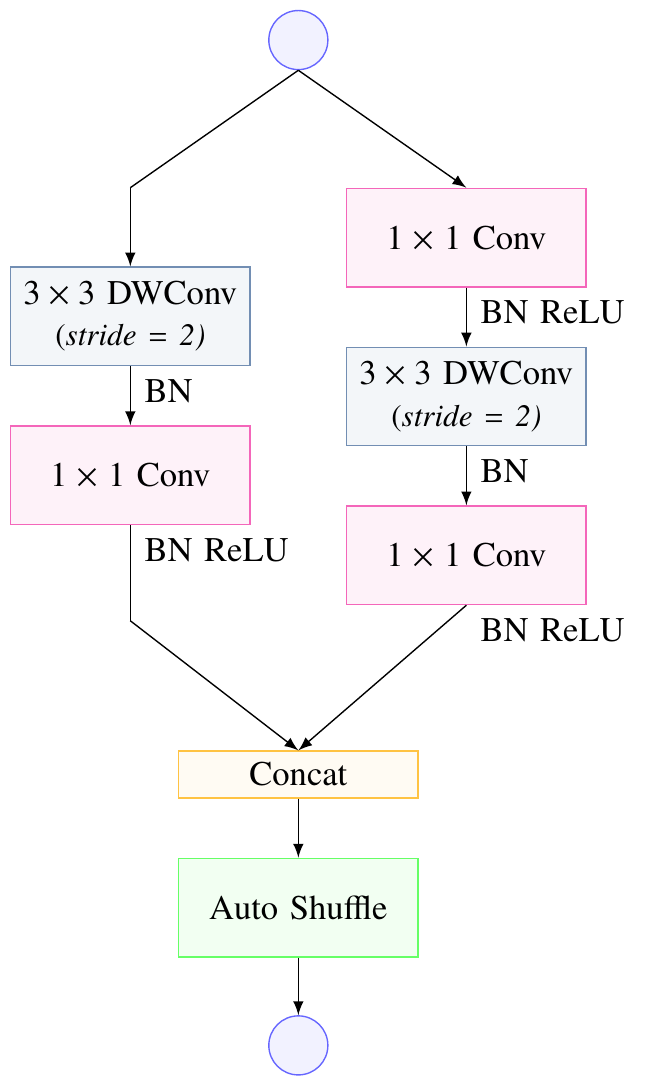}
    }
\caption{AutoShuffleNet units based on ShuffleNet v2.}
\label{unit2}   
\end{center}
\end{minipage}
\end{figure}

\section{Permutation Problems Unsolvable by Convex Relaxation}
The doubly stochastic matrix condition (\ref{ct}) is a popular convex relaxation of permutation. However, it is not powerful enough to enable auto-shuffle learning as we shall see later. In this section, we present examples from permutation optimization to show the limitation of convex relaxation (\ref{ct}), and how our proposed penalty (\ref{DL12}) can strength (\ref{ct}) to retrieve permutation matrices. 

Let us recall the graph matching (GM) problem, see \cite{vog,wright_16,Kimm_15,anticvx,path_09} and references therein. The goal is to align the vertices of two graphs to minimizes the number of edge disagreements. Given a pair of $n$-vertex graphs $G_A$ and $G_B$, with respective adjacency $n\times n$ matrices $A$ and $B$, the GM problem is to find a permutation matrix $Q$ to minimize $\| A Q - Q B \|_{F}^{2}$. Let $\Pi$ be the set of all permutation matrices, solve 
\begin{equation}\label{gm1}
Q^*:= {\rm argmin}_{Q \in \Pi} \, \| A\, Q - Q\, B \|_{F}^{2}.
\end{equation}
By algebraic identity:
\begin{align*}
    \| A\, Q - Q\, B \|_{F}^{2}  =&\  {\rm trace}\{(A\, Q-Q\, B)^T(A\, Q-Q\, B)\}\\
    =&\ {\rm trace}(A^T\, A) + {\rm trace}(B^T\, B) -2 {\rm trace}(A\, Q\, B^T\, Q^T),
\end{align*}
the GM problem (\ref{gm1}) is same as:
\begin{equation}\label{gm2}
Q^*= {\rm argmin}_{Q\in \Pi} {\rm trace}((-A)\, Q\, B^T\, Q^T),
\end{equation}
a quadratic assignment problem (QAP). The general QAP for two real square matrices $A$ and $B$ is \cite{vog,wright_16}:
\begin{equation}\label{qap}
Q^*= {\rm argmin}_{Q\in \Pi}\; {\rm trace}( A \, Q\, B^T\, Q^T).
\end{equation}
The convex relaxed GM is:
\begin{equation}\label{cvxgm}
Q_*:= {\rm argmin}_{Q \in D^N} \, \| A\, Q - Q\, B \|_{F}^{2}.
\end{equation}
As an instance of general QAP, let us consider problem (\ref{gm1}) in case $n=2$ for two real matrices:
\[ A=\left [ \begin{array}{cc} 
    a & b \\
    c & d 
    \end{array} \right ], \;\; 
    B=\left [ \begin{array}{cc} 
    a' & b' \\
    c' & d' 
    \end{array} \right ].
 \]
If $Q \in \mathcal{D}^2$, then:
\[ Q = \left [ \begin{array}{cc}
    q & 1-q \\
    1-q & q
    \end{array} \right ], \;\;q \in [0,1];
\]
and $A Q - Q B$ equals ( $q':=1-q$ ):
\[\left [ \begin{array}{cc} 
    (a-a')\, q + (b-c')\, q' & (b-b')\, q+(a-d')\, q' \\
    (c-c')\, q + (d-a')\, q' & (d-d')\,q +(c-b')\, q'
    \end{array} \right ]. 
\]

{\bf Example 1:} We show that $Q_*\not = Q^*$ by selecting:
\[ A=\left [ \begin{array}{cc} 
     1 & 2 \\
     3 & 1 
     \end{array} \right ], \;\; 
     B=\left [ \begin{array}{cc} 
     0 & 2 \\
     3 & 1 
     \end{array} \right ].
 \]
\[ A Q- Q B=\left [ \begin{array}{cc} 
    2q-1 & 0 \\
    1-q & 1-q 
    \end{array} \right ],
\]
\[ \|AQ-QB\|_{F}^{2} = (2q-1)^2 + 2 (1-q)^2 = 6 q^2 - 8 q + 3,
\]
which is convex on $[0,1]$ and has minimum at $q_* = 2/3$. The convex relaxed matrix solution is:
\[ Q_* = \left [ \begin{array}{cc} 
    2/3 & 1/3 \\
    1/3 & 2/3 
    \end{array} \right ],
\]  
however, the permutation matrix solution $Q^*$ to GM problem (\ref{gm1}) is the $2\times 2$ identity matrix at $q=1$.  

In the spirit of objective function (\ref{obj}), let us minimize:
\[ \|A Q - QB \|_{F}^{2} + \lambda\, P(Q), \]
or equivalently minimize (after skipping some additive constants in $P$): 
\[ F=F(q):= 6\, q^2 - 8 \, q + 2 - 4 \lambda (q^2 + (1-q)^2)^{1/2}.\]
An illustration of $F$ is shown in Fig. \ref{gmpic}. The minimal point moves from the interior of the interval $[0,1]$ when $\lambda=0.25$ (dashed line, top curve) to the end point 1 as $\lambda$ increases to 1 (line-star, middle curve) and remains there as $\lambda$ further increases to 2 (line-circle, bottom curve). So for $\lambda \in [1,2]$, $Q^*$ is recovered with our proposed penalty. \qed
         
\begin{figure}[!ht]
\begin{center}
\centerline{\includegraphics[width=0.65\textwidth]{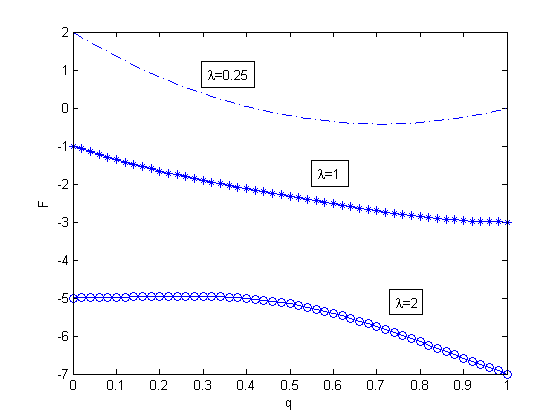}}
\caption{The function $F(q)$ as penalty parameter $\lambda$ varies from 0.25 (interior minimal point, dashed line, top) to 1 (line-star, middle) and 2 (line-circle, bottom). Minimal point occurs at $q=1$ in the latter two curves.}
\label{gmpic}
\end{center}
\end{figure}


{\bf Example 2:} Consider the adjacent matrix $B$ ($A$) of an un-directed graph of 2 nodes and 1 edge (with a loop at node 1). An edge adds 1 and a loop adds 2 to an adjacent matrix.   
\[ A=\left [ \begin{array}{cc} 
    2 & 1 \\
    1 & 0 
    \end{array} \right ], \;\; 
    B=\left [ \begin{array}{cc} 
    0 & 1 \\
    1 & 0 
    \end{array} \right ].
\]
Then:
\[ A Q- Q B=     \left [ \begin{array}{cc} 
    2q & 2(1-q) \\
    0 &  0 
    \end{array} \right ],
\]
\[ \|A\, Q-Q\, B\|_{F}^{2} = 4[q^2 + (1-q)^2]. \]
So $Q_* = Q(1/2) \not = Q^* = Q(0)=Q(1)$. The $P$ regularized objective function (modulo additive constants) is:
\[ F=4 [q^2 + (1-q)^2] - 4 \lambda (q^2 + (1-q)^2)^{1/2}, \]
with $F(0)=F(1)= 4 - 4 \lambda$. In view of:
\[ F'/4 = (2q-1)[ 2 - \lambda /(q^2 + (1-q)^2)^{1/2}], \]
two possible interior critical points are:
\begin{equation}\label{2crit}
q = 1/2 \;\;{\rm or}\;\;  q^2 + (1-q)^2 = \lambda^2/4.
\end{equation}
Since $\max_{q\in [0,1]}\{ q^2 + (1-q)^2\}= 1$, if $\lambda > 2$, the second equality in (\ref{2crit}) is ruled out. Comparing $F(1/2) = 2 - 4 \lambda 2^{-1/2} = 
2 (1- \sqrt{2} \lambda )$ with $F(0)$, we see that the global minimal point does not occur at $q=1/2$ if
\[ 1- \sqrt{2} \lambda > 2 - 2 \lambda\;\; {\rm or}\;\; 
 \lambda > 1/(2-\sqrt{2}) \approx 1.7071.\]
Hence if $\lambda > 2$, minimizing $F$ recovers $Q^*$.  \qed

In Fig. \ref{gmpic2}, we show that two minimal points of $F$ occur in the interior of $(0,1)$ when $\lambda =1.8, 1.9$, and transition to $q=0,1$, at $\lambda = 2$. When $\lambda^2/4 < \min_{q\in [0,1]} \{q^2 + (1-q)^2\} = 1/2$, or $\lambda < \sqrt{2}$, the second equality in (\ref{2crit}) cannot hold, $F$ becomes convex with a unique minimal point at $q=1/2$. 

\begin{figure}[!ht]
\begin{center}
\centerline{\includegraphics[width=0.65\textwidth]{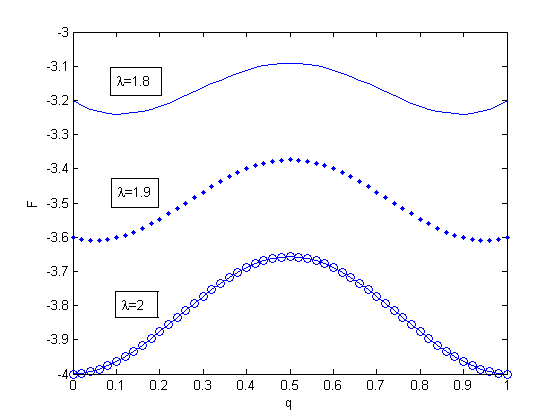}}
\caption{The function $F(q)$ as penalty parameter $\lambda$ varies from 1.8 (solid line, top) to 1.9 (dot, middle) and 2 (line-circle, bottom) where minimal points occur at $q=0, 1$. Interior minimal points occur on $[0,1]$ when $\lambda = 1,8, 1.9$.}
\label{gmpic2}    
\end{center}
\end{figure}

\begin{rem}
We refer to \cite{anticvx} on certain correlated random Bernoulli graphs where $Q^* \not = Q_*$. 
On the other hand, there is a class of friendly graphs \cite{Kimm_15} where $Q^*=Q_*$. 
Existing techniques to improve convex relaxation on GM and QAP include approximate quadratic programming, sorting networks and path following based homotopy methods  \cite{vog,wright_16,path_09}. Our proposed penalty (\ref{DL12})-(\ref{ct}) appears more direct and generic. A detailed comparison will be left for a future study.
\end{rem}

\begin{rem}
In example 1 above, if the convex relaxed $q_*=2/3$ is rounded up to 1, then $Q_* = Q^*$. In example 2 (Fig. \ref{gmpic2}), the two interior minimal points at $\lambda = 1.8,  1.9$, after rounding down (up), become zero or one. So convex relaxation with the help of rounding still recovers the exact permutation.  
We show in example 3 below that convex relaxation still fails after rounding (to 1 if the number is above 1/2, to 0 if the number is below 1/2).
\end{rem}

{\bf Example 3:} We consider the two-layer neural network model with one hidden layer \cite{LiYuan}. Given $m \geq 0$, the forward model is the following function:
\begin{align*}
    f_m\left(x,W\right) = \left\|\, \phi\left(\left(mI+W\right)x\right)\, \right\|_1,
\end{align*}
where $\phi\left(v\right) = \max\left(v,0\right)$ is the ReLU activation function, $x =(x_1,x_2) \in \mathbf{R}^2$ is the input random vector drawn from a probability distribution, $W \in \mathbf{R}^{2\times2}$ is the weight matrix, $I$ is the identity matrix. Consider a two-layer teacher network with $2\times 2$ weight matrix
\begin{align*}
    W^* = \left[\begin{array}{cc}
         a & b \\
         c & d
    \end{array}\right],\quad a, b, c, d \geq 0.
\end{align*}
We train the student network with doubly stochastic constraint on $W$ using the $\ell_2$ loss:
\begin{align*}
   L\left(W\right) = \mathbf{E}_x\left[f_m\left(x,W\right)-f_m\left(x,W^*\right)\right]^2. 
\end{align*}
Let $p \in \left[0,1\right]$,
\begin{align*}
    W = \left[\begin{array}{cc}
         p & 1-p \\
         1-p & p
    \end{array}\right].
\end{align*}
We write the loss function as
\begin{align*}
    \l_m\left(p\right) := &L\left(W\right)\\
    = &\, \mathbf{E}_x\left[\phi\left(\left(m+p\right)x_1+\left(1-p\right)x_2\right)+\phi\left(\left(1-p\right)x_1+\left(m+p\right)px_2\right)\right.\\
    & \left.-\phi\left(ax_1+bx_2\right)-\phi\left(cx_1+dx_2\right)\right]^2\\
    = &\, \mathbf{E}_x\phi\left(\left(m+p\right)x_1+\left(1-p\right)x_2\right)^2+\mathbf{E}_x\phi\left(\left(1-p\right)x_1+\left(m+p\right)x_2\right)^2\\
    &+2\mathbf{E}_x\left[\phi\left(\left(m+p\right)x_1+\left(1-p\right)x_2\right)\phi\left(\left(1-p\right)x_1+\left(m+p\right)x_2\right)\right]\\
    &-2G_m\left(p,a,b\right)-2G_m\left(p,c,d\right)+\mathbf{E}_x\left[\phi\left(ax_1+bx_2\right)+\phi\left(cx_1+dx_2\right)\right]^2, \numberthis \label{eqn3}
\end{align*}
where for $s, t \geq 0$, $G_m\left(p,s,t\right)$ is defined as
\begin{align*}
    \mathbf{E}_x\left[\phi\left(\left(m+p\right)x_1+\left(1-p\right)x_2\right)\phi\left(sx_1+tx_2\right)+\phi\left(\left(1-p\right)x_1+\left(m+p\right)x_2\right)\phi\left(sx_1+tx_2\right)\right].
\end{align*}
Define $\mathcal{I}\left(q,r,s,t\right) := \mathbf{E}_x\left[\phi\left(qx_1+rx_2\right)\phi\left(sx_1+tx_2\right)\right],$ then $\mathcal{I}\left(q,r,s,t\right) = \mathcal{I}\left(s,t,q,r\right)$ and
\begin{align*}
    G_m\left(p,s,t\right) = \mathcal{I}\left(m\!+\!p,1\!-\!p,s,t\right)+\mathcal{I}\left(1\!-\!p,m\!+\!p,s,t\right).
\end{align*}
For simplicity, let $x$ obey uniform distribution on $\left[-1,1\right]^2$. For $qt \geq rs$, $q+r > 0$, $s+t > 0$, $\mathcal{I}\left(q,r,s,t\right)$ equals
\begin{align*}
    \left\{\begin{aligned}
    &\dfrac{2}{3}\left(qs+rt\right)+\dfrac{q^2\left(qt-3rs\right)}{24r^2}+\dfrac{s^2\left(3qt-rs\right)}{24t^2}, q < r\\
    &\dfrac{1}{3}\left(qs+rt\right)+\dfrac{1}{4}\left(qt+rs\right)+\dfrac{1}{24}(\dfrac{r^2}{q^2}+\dfrac{s^2}{t^2})\left(3qt-rs\right),\ q \geq r \text{ and } t \geq s\\
    &\dfrac{2}{3}\left(qs+rt\right)+\dfrac{r^2\left(3qt-rs\right)}{24q^2}+\dfrac{t^2\left(qt-3rs\right)}{24s^2}, t < s.
    \end{aligned}\right. \numberthis \label{eqn_i}
\end{align*}
We have
\begin{align*}
\mathbf{E}_x\phi\left(\left(m+p\right)x_1+\left(1-p\right)x_2\right)^2 &= \mathbf{E}_x\phi\left(\left(1-p\right)x_1+\left(m+p\right)x_2\right)^2\\
&= \dfrac{2}{3}\left[\left(m+p\right)^2+\left(1-p\right)^2\right], \numberthis \label{eqn3-1}
\end{align*}
\begin{align*}
\mathbf{E}_x\left[\phi\left((m\!+\!p)x_1\!+\!(1\!-\!p)x_2\right)\phi\left((1\!-\!p)x_1\!+\!(m\!+\!p)x_2\right)\right] = \dfrac{\left(m+1\right)^3}{3\theta_m\left(p\right)}+\dfrac{\left(m+p\right)^4}{12\theta_m\left(p\right)^2}, \numberthis \label{eqn3-2}
\end{align*}
where $\theta_m\left(p\right) := \max\left(m+p,1-p\right)$.
The last term in \eqref{eqn3} is a constant:
\begin{align*}
\mathbf{E}_x\left[\phi\left(ax_1+bx_2\right)+\phi\left(cx_1+dx_2\right)\right]^2 = \dfrac{2}{3}\left(a^2+b^2+c^2+d^2\right)+2\, \mathcal{I}\, \left(a,b,c,d\right).\numberthis \label{eqn3-4}
\end{align*}
Consider a special case when $a = 1/3$, $b = 2/3$, $c = 1/4$ and $d = 3/4$. By \eqref{eqn3-1}-\eqref{eqn3-4}, the loss function $l_m\left(p\right)$ equals
\begin{align*}
\dfrac{2}{3}\left[\left(m+p\right)^2+\left(1-p\right)^2\right]+\dfrac{2\left(m+1\right)^3}{3\theta_m\left(p\right)}+\dfrac{\left(m+1\right)^4}{6\theta_m\left(p\right)^2}-2G_m(p,\dfrac{1}{3},\dfrac{2}{3})-2G_m(p,\dfrac{1}{4},\dfrac{3}{4})+\dfrac{8113}{5184}.
\end{align*}

Let $F_m\left(p\right) := l_m\left(p\right)-4\lambda\sqrt{p^2+\left((1-p\right)^2}.$ When $m= 0$, $\lambda =0$, Fig. \ref{2layer_uniform} (top left) shows $l_0\left(p\right)$ has minimal points in the interior of $\left(0, 1\right)$. A permutation matrix $W$ that minimizes $L\left(W\right)$ can be achieved by rounding the minimal points. However, when $m = 1$, $\lambda=0$, (Fig. \ref{2layer_uniform}, top right), rounding the interior minimal point of $l_1\left(p\right)$ gives the wrong permutation matrix at $p = 1$. At $\lambda=0.4$, the $P$ regularization selects the correct permutation matrix.

\begin{figure}[!ht]
\begin{center}
\begin{tabular}{cc}
\includegraphics[width=0.4\textwidth]{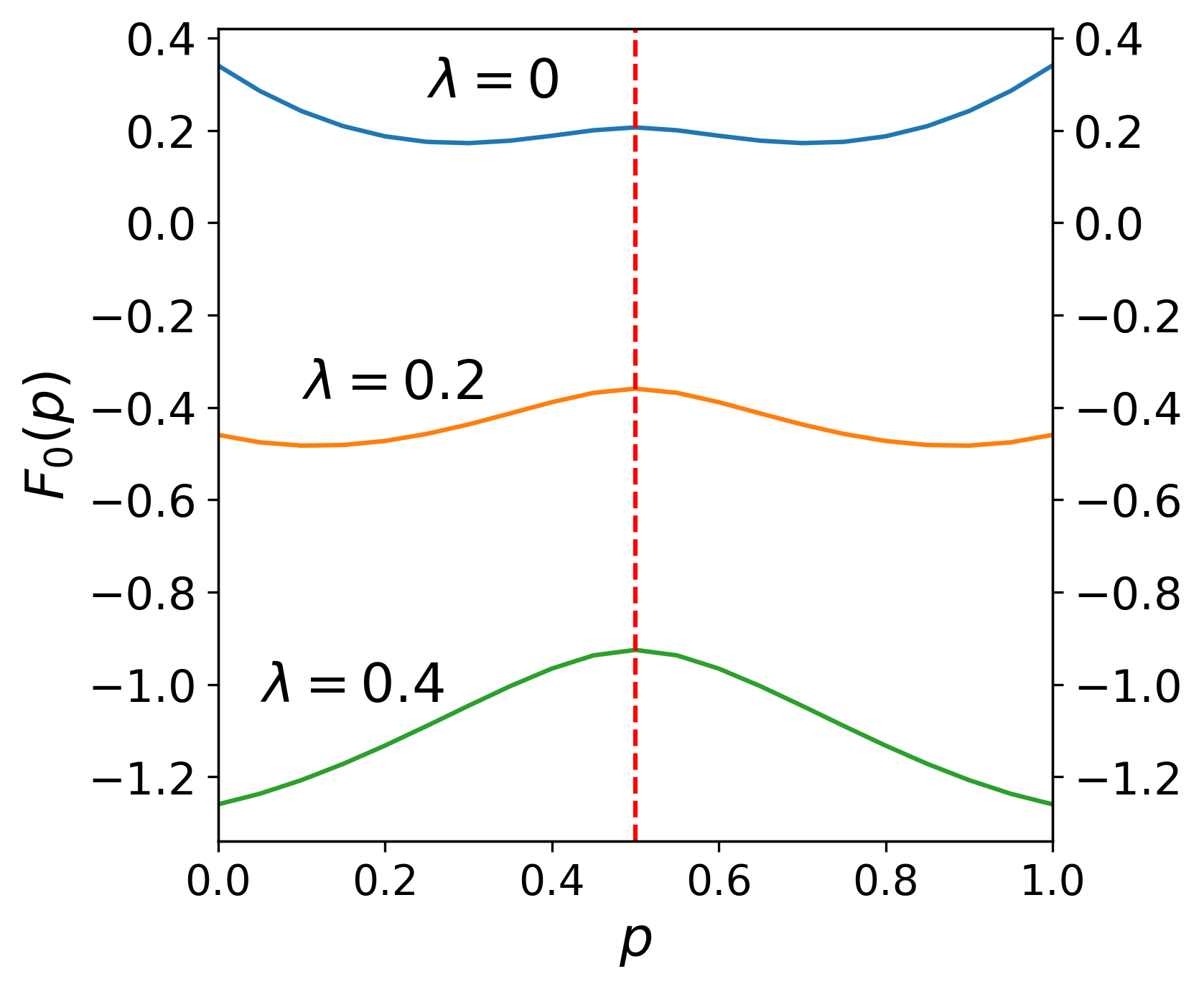} &
\includegraphics[width=0.43\textwidth]{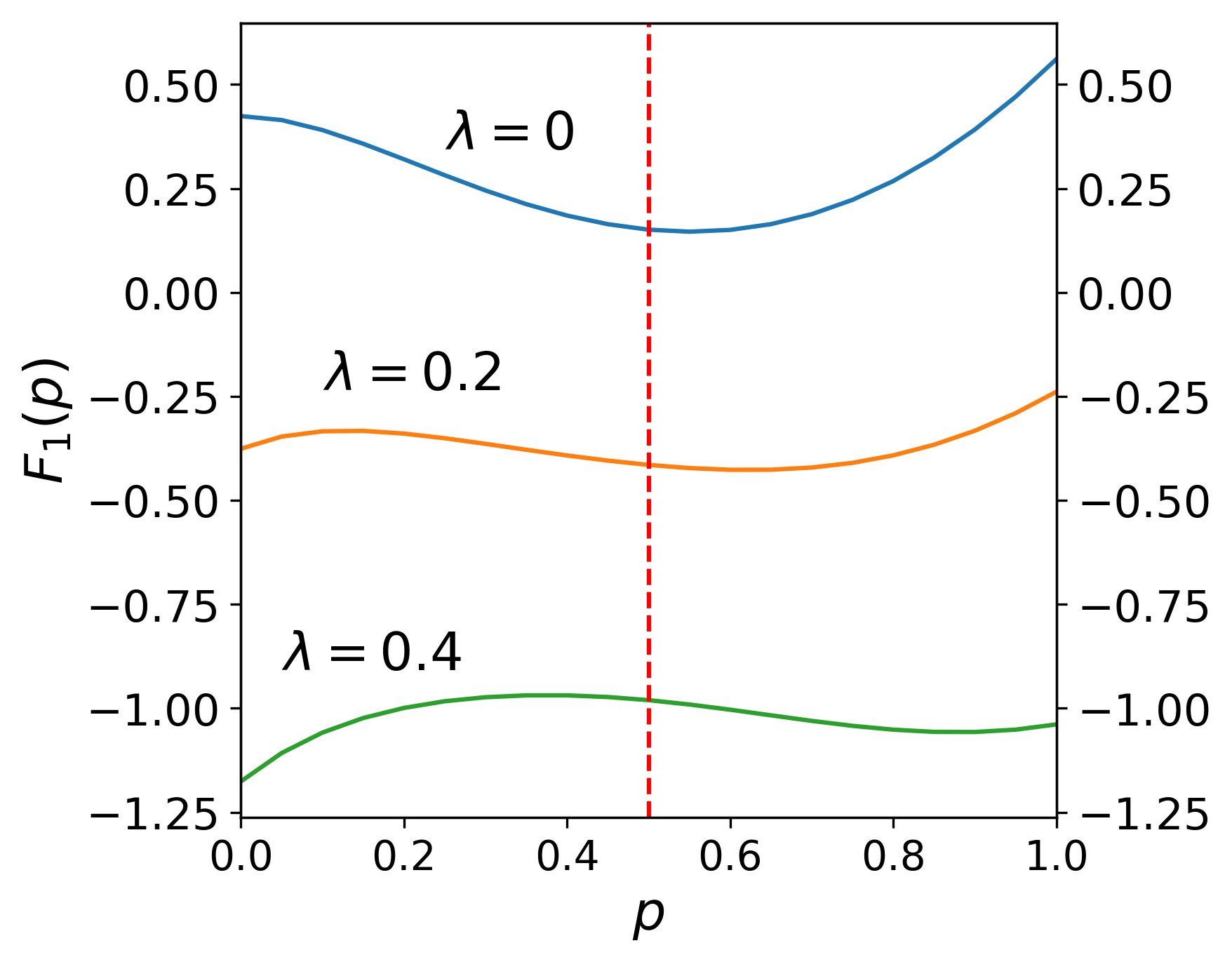}
\end{tabular}
\caption{$F_m(p)$ ($m=0$ left, $m=1$  right) as penalty parameter $\lambda$ varies for the uniformly distributed input data on $[-1, 1]^2$.}
\label{2layer_uniform}    
\end{center}
\end{figure}

\begin{rem}
If $x$ obeys the unit Gaussian distribution as in \cite{LiYuan}, the $F_m(p)$ functions are more complicated analytically, however their plots resemble those for uniformly distributed $x$, see Fig. \ref{2layer_gaussian}. 
\end{rem}



\begin{figure}[!ht]
\begin{center}
\begin{tabular}{cc}
    \includegraphics[width=0.41\textwidth]{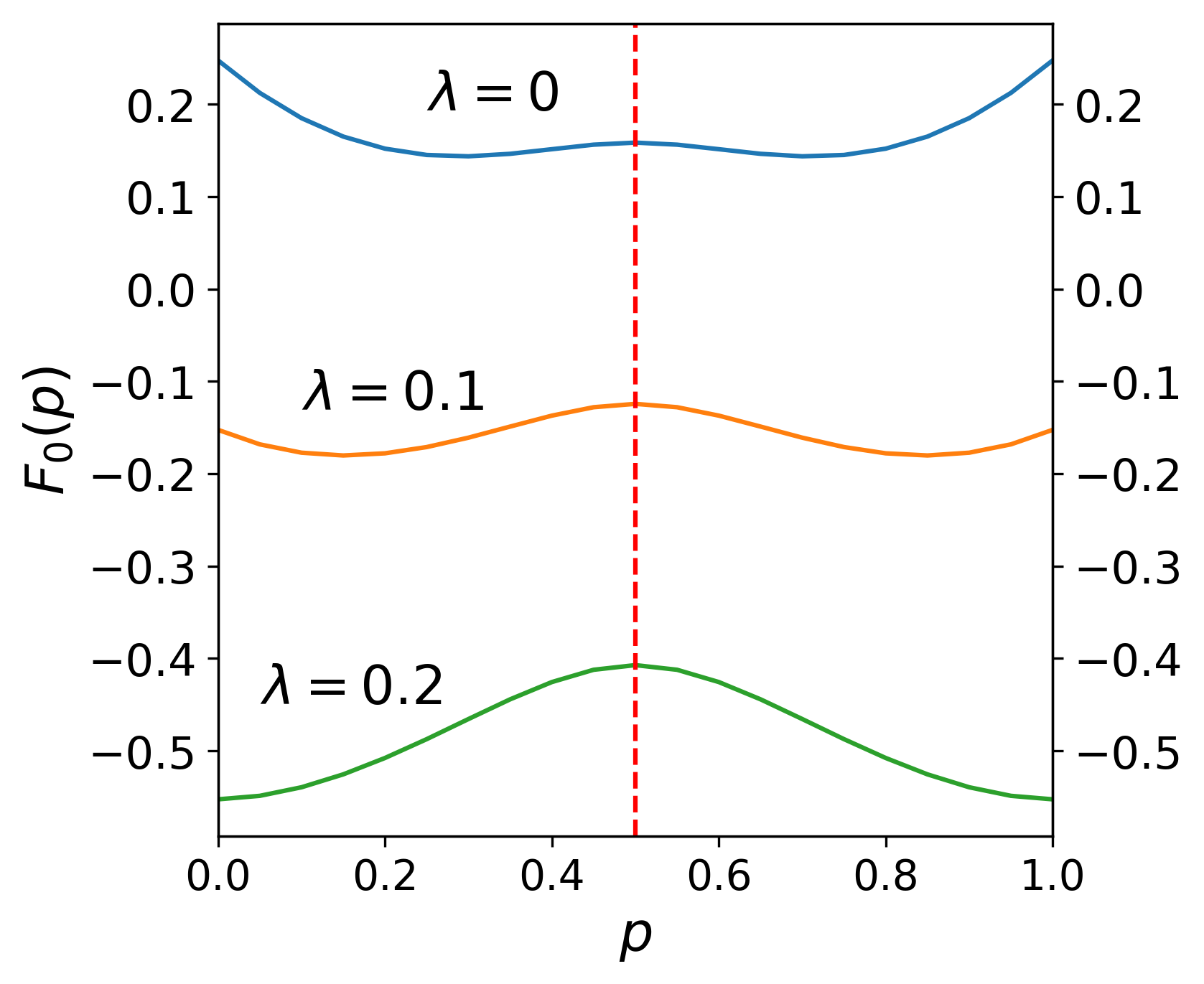} & \includegraphics[width=0.41\textwidth]{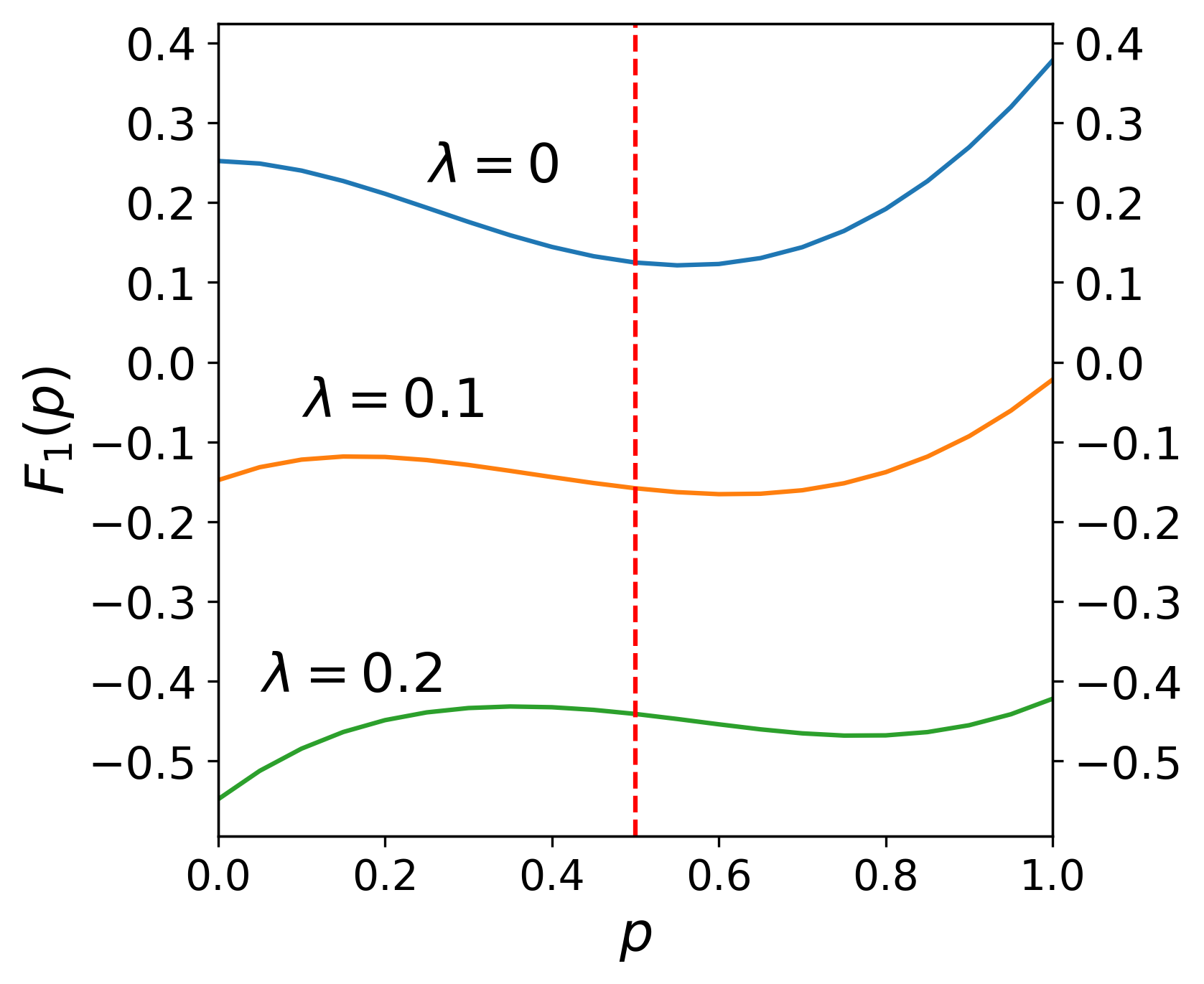}
\end{tabular}
\caption{$F_m(p)$ ($m=0$ left, $m=1$ right) as penalty parameter $\lambda$ varies for unit Gaussian input data on $\mathbf{R}^2$.}
\label{2layer_gaussian}    
\end{center}
\vskip -0.2in
\end{figure}

\section{Experiments}

We relax the shuffle units in ShuffleNet v1 \cite{shufflev1} and ShuffleNet v2 \cite{shufflev2} and perform experiments on CIFAR-10 \cite{cifar_09} and ImageNet \cite{imagenet_09, imagnet_12} classification datasets. The accuracy results of auto shuffles are evaluated after the relaxed shuffles are rounded.

On CIFAR-10 dataset, we set the $\ell_{1-2}$ penalty parameter $\lambda = 10^{-3}$. All experiments are randomly initialized with learning rate linearly decaying from $0.2$. We train each network for $200$ epochs. We set weight-decay $10^{-4}$, momentum $0.95$ and batch size $128$. The experiments are carried out on a machine with single Nvidia GeForce GTX 1080 Ti GPU. In Table 1, we see that auto shuffle consistently improves by as much as 1\% on manual shuffle in both v1 and v2 models of ShuffleNet.


\begin{table}[!ht]
\caption{CIFAR-10 validation accuracies.}
\label{tab:1}
\begin{center}
\begin{tabular}{lcc}
  \toprule
  Networks & Manual & Auto \\
  \midrule
  ShuffleNet v1 (g=3) & 90.55 & 91.76\\
  ShuffleNet v1 (g=8) & 90.06 & 91.26\\
  ShuffleNet v2 (1$\times$) & 91.90 & 92.81\\
  ShuffleNet v2 (1.5$\times$) & 92.56 & 93.22\\
  \bottomrule
\end{tabular}
\end{center}
\end{table}

On ImageNet dataset, we set the $\ell_{1-2}$ penalty parameter $\lambda = 2\times10^{-3}$. For each experiment, the training process includes two training cycles: the first cycle is randomly initialized with learning rate starting at $0.2$ and the second one is resumed from the first one with learning rate starting at $0.1$. Each cycle consists of $100$ epochs and the learning rate decays linearly. We set weight-decay $4\times10^{-5}$, momentum $0.9$ and batch size $512$. The experiments are carried out on a machine with 4 Nvidia GeForce GTX 1080 Ti GPUs. In Table \ref{tab:2}, we see that auto shuffle again consistently improves on manual shuffle for both v1 and v2 models.

\begin{table}[!ht]
\caption{ImageNet top-1 validation accuracies.}
\label{tab:2}
\begin{center}
\begin{tabular}{lcc}
  \toprule
  Networks & Manual & Auto \\
  \midrule
  ShuffleNet v1 (g=3) & 65.50 & 65.62\\
  ShuffleNet v1 (g=8) & 65.18 & 65.76\\
  ShuffleNet v2 (1$\times$) & 67.46 & 67.69\\
  ShuffleNet v2 (1.5$\times$) & 70.58 & 70.60\\
  \bottomrule
\end{tabular}
\end{center}
\end{table}

The permutation matrix of the first shuffle unit in ShuffleNet v1 (g=3) is a matrix of size $60\times60$, which can be visualized in Fig. \ref{pmat} (manual, left) along with an auto shuffle (right). The dots (blanks) denote locations of 1's (0's). The auto shuffle looks disordered while the manual shuffle is ordered. However, the inference cost of auto shuffle is comparable to manual shuffle since the shuffle is fixed and stored after training.

\begin{figure}[!ht]
\begin{center}
\begin{tabular}{cc}
\includegraphics[width=0.35\textwidth]{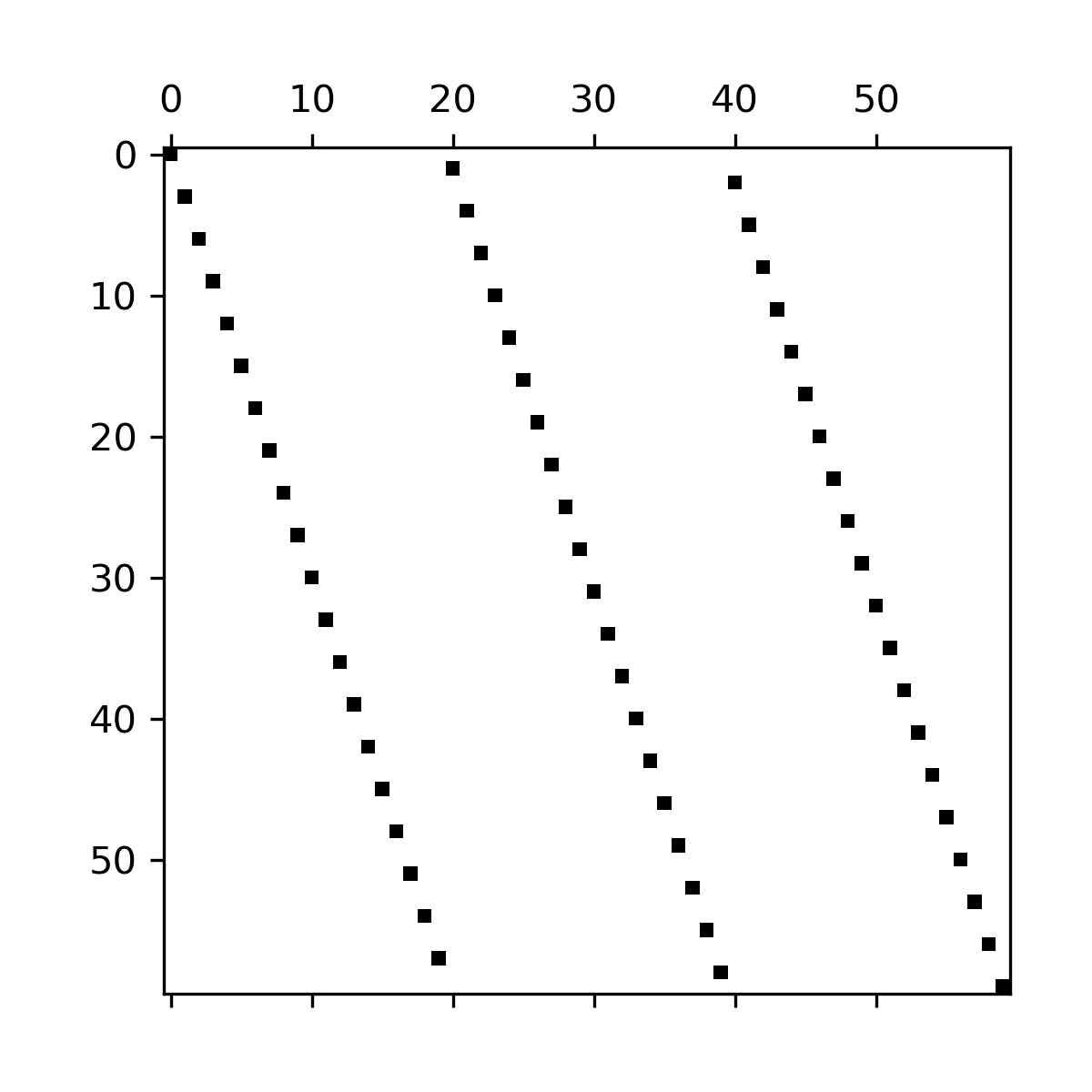}
\includegraphics[width=0.35\textwidth]{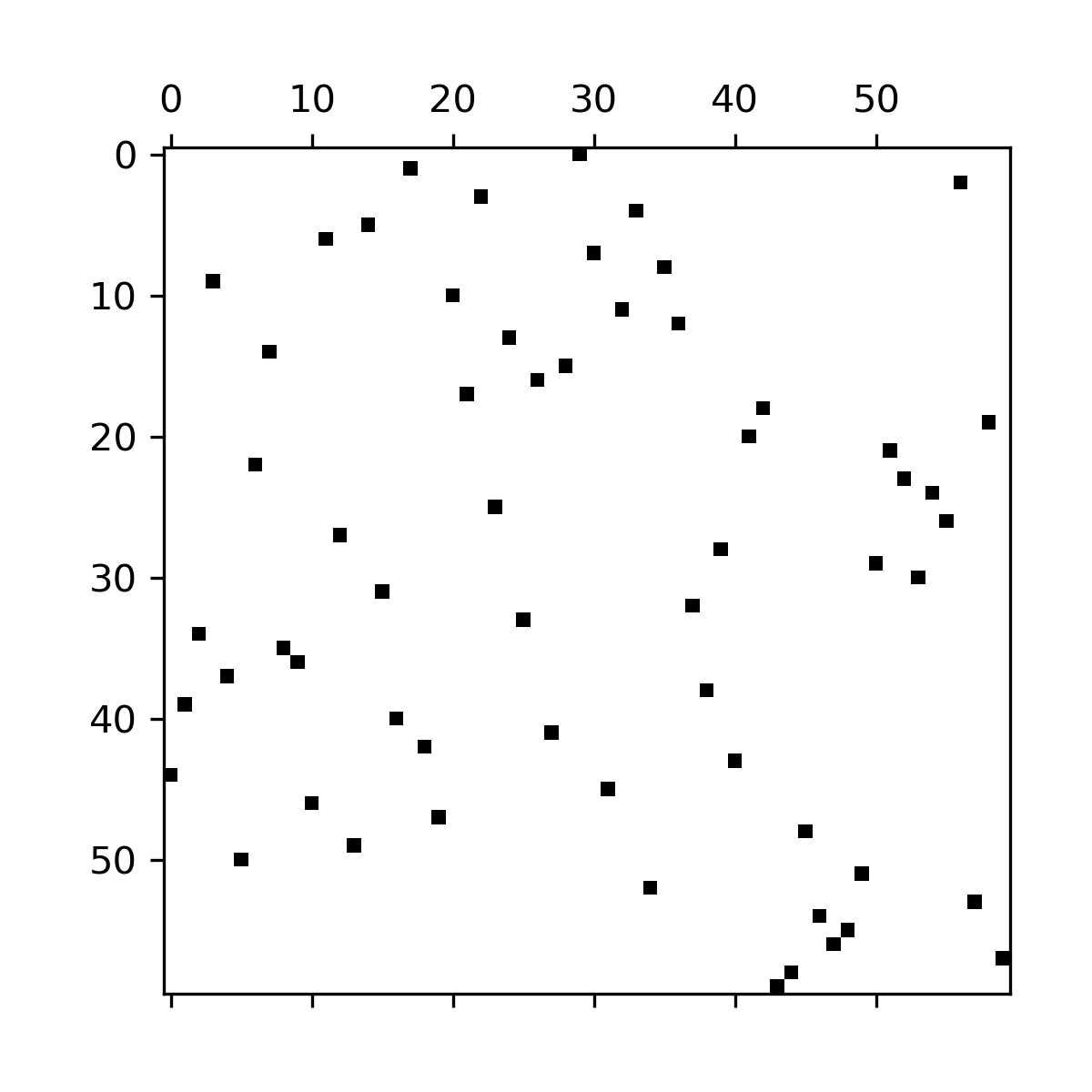}
\end{tabular}
\caption{Permutation matrices of the first shuffle unit in ShuffleNet v1 (g=3) of manual shuffle (left) and auto shuffle (right). The auto shuffle is trained on CIFAR-10 dataset. The dots (blanks) indicate locations of 1's (0's). The auto shuffle looks disordered while the manual shuffle is ordered. However, the inference cost of auto shuffle is comparable to manual shuffle in inference. }
\label{pmat}
\end{center}
\end{figure}

The accuracy drop due to rounding to produce auto shuffle from relaxed shuffle is indicated by relative error in Table \ref{tab:3}. On CIFAR-10 dataset, negligible drop is observed for ShuffleNet v1. Interestingly, rounding even gained accuracy for ShuffleNet v1 on ImageNet dataset.

\begin{table}[!ht]
\caption{Relative error of accuracy of rounding relaxed shuffle. The -/+ refer to accuracy drop/gain after rounding to produce auto shuffle from relaxed shuffle.}
\label{tab:3}
\begin{center}
\begin{tabular}{llc}
  \toprule
  Dataset & Networks & Relative Error\\
  \midrule
  \multirow{4}{*}{CIFAR-10} & ShuffleNet v1 (g=3) & 0\\
  & ShuffleNet v1 (g=8) & 0\\
  & ShuffleNet v2 (1$\times$) & -2.15E-4\\
  & ShuffleNet v2 (1.5$\times$) & -1.07E-3\\
  \midrule
  \multirow{4}{*}{ImageNet} & ShuffleNet v1 (g=3) & +6.10E-5\\
  & ShuffleNet v1 (g=8) & +3.04E-5\\
  & ShuffleNet v2 (1$\times$) & 0\\
  & ShuffleNet v2 (1.5$\times$) & -2.83E-5\\
  \bottomrule
\end{tabular}
\end{center}
\end{table}

The $\ell_{1-2}$ penalty of ShuffleNet v1 (g=3) is plotted in Fig. \ref{penaltyplot}. As the penalty decays, the validation accuracy of \textbf{auto shuffle} (after rounding) becomes closer to \textbf{relaxed shuffle} (before rounding), see Fig. \ref{roundplot}.

\begin{figure}[!ht]
\begin{center}
\centerline{\includegraphics[width=0.65\textwidth]{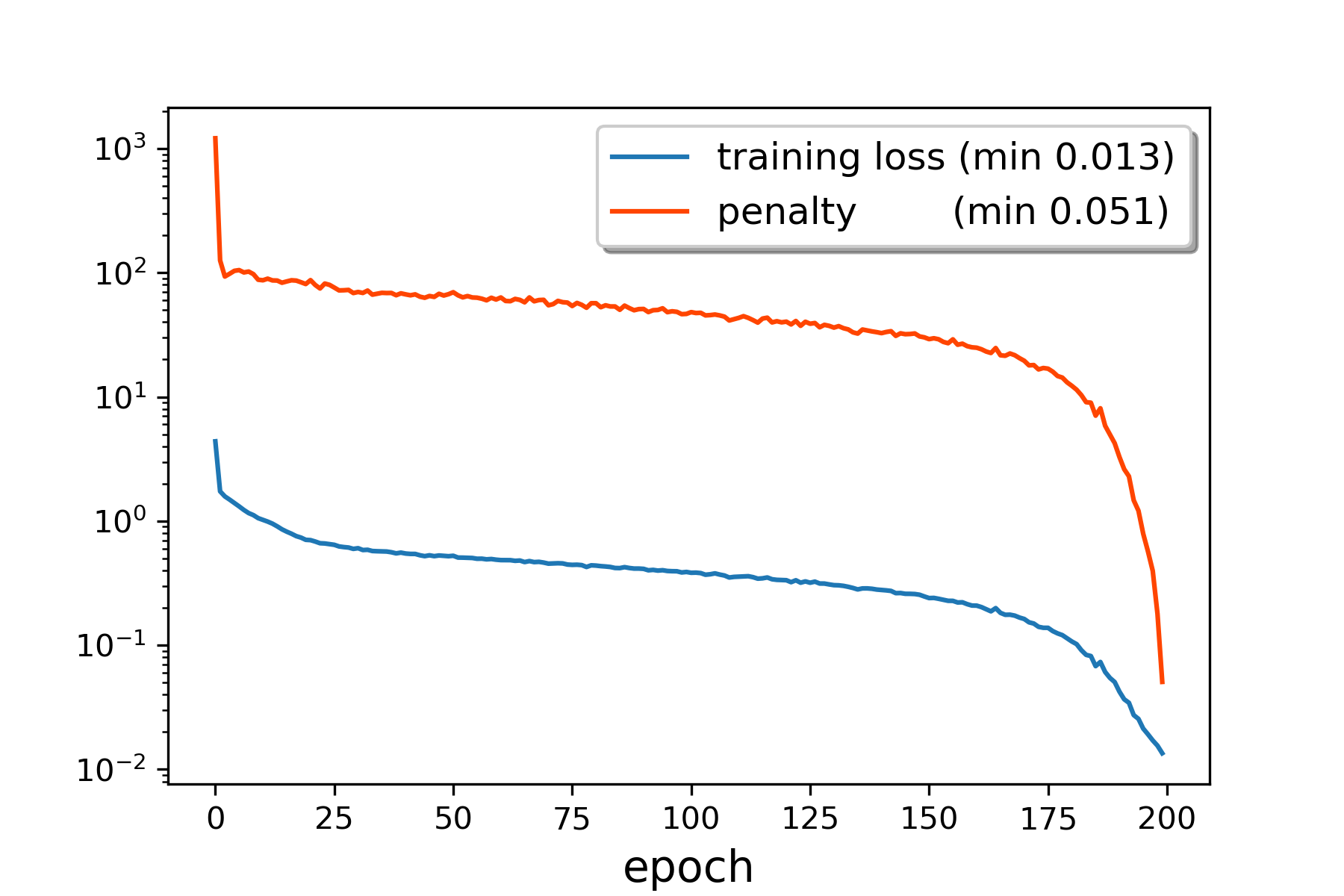}}
\caption{Training loss $L$ and penalty $P$ of ShuffleNet v1 (g=3) with relaxed shuffle on CIFAR-10.}
\label{penaltyplot}
\end{center}
\end{figure}

\begin{figure}[!ht]
\begin{center}
\centerline{\includegraphics[width=0.65\textwidth]{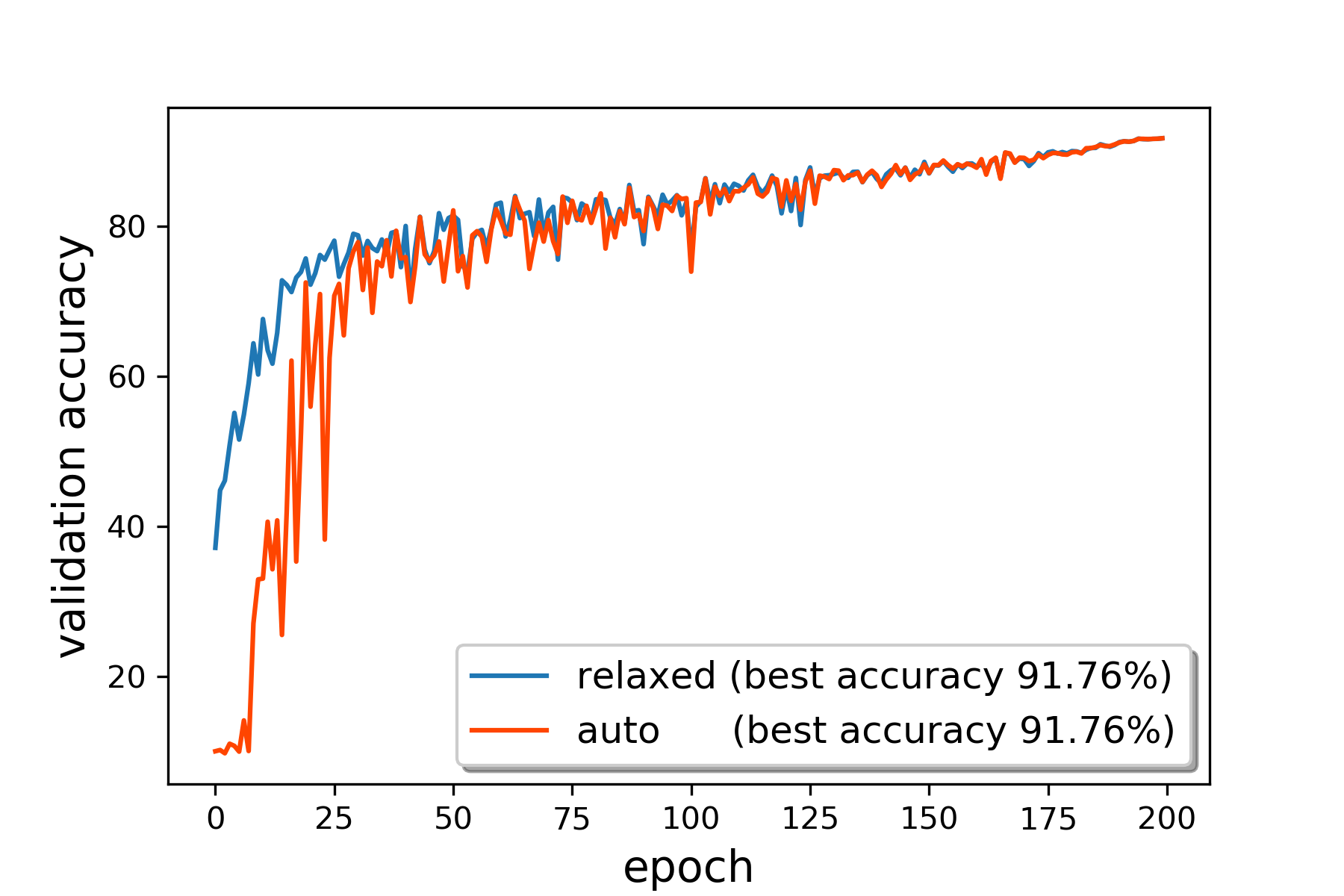}}
\caption{Validation accuracy of ShuffleNet v1 (g=3) with relaxed shuffle (before rounding) and auto shuffle (after rounding) on CIFAR-10. The rounding error becomes smaller during training.}
\label{roundplot}
\end{center}
\end{figure}

To demonstrate the significance of the $\ell_{1-2}$ regularization, we also tested auto shuffle with various $\lambda$ on ShuffleNet v1 (g=3). Table \ref{tab:lambda} shows that the accuracy drops much after the relaxed shuffle is rounded. We plot the stochastic matrix of the first shuffle unit of the network at $\lambda = 0$ and $\lambda = 10^{-5}$ respectively in Fig. \ref{smat}. The penalty is large when $\lambda$ is relatively small, indicating that the stochastic matrices learned are not close to optimal permutation matrices.

\begin{table}[!ht]
\caption{CIFAR-10 validation accuracies of ShuffleNet v1 (g=3) with relaxed shuffle (before rounding) and auto shuffle (after rounding), and penalty values of relaxed shuffle at various $\lambda$'s. The penalty and rounding error tends to zero as $\lambda$ increases.}
\label{tab:lambda}
\begin{center}
\begin{tabular}{cccccc}
  \toprule
  $\lambda$ & 0 & 1E-5 & 1E-4 & 5E-4 & 1E-3\\
  \midrule
  relaxed & 90.00 & 90.18 & 90.48 & 91.45 & 91.76\\
  auto & 10.00 & 38.18 & 11.37 & 71.50 & 91.76\\
  penalty & 3.37E3 & 1.59E3 & 4.95E2 & 3.13E-1 & 5.07E-2\\
  \bottomrule
\end{tabular}
\end{center}
\end{table}

\begin{figure}[!ht]
\begin{center}
\begin{tabular}{cc}
\includegraphics[width=0.35\textwidth]{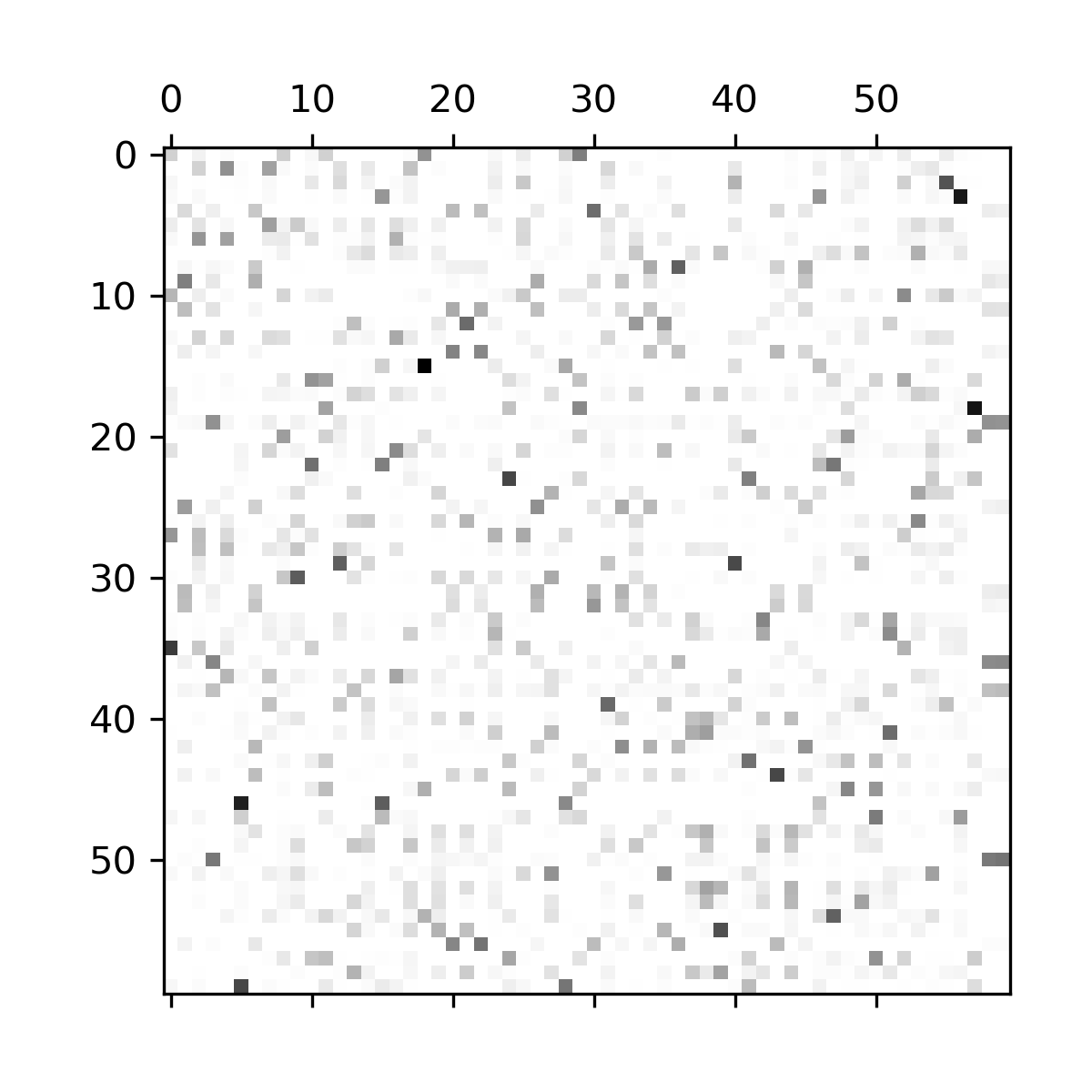}
\includegraphics[width=0.35\textwidth]{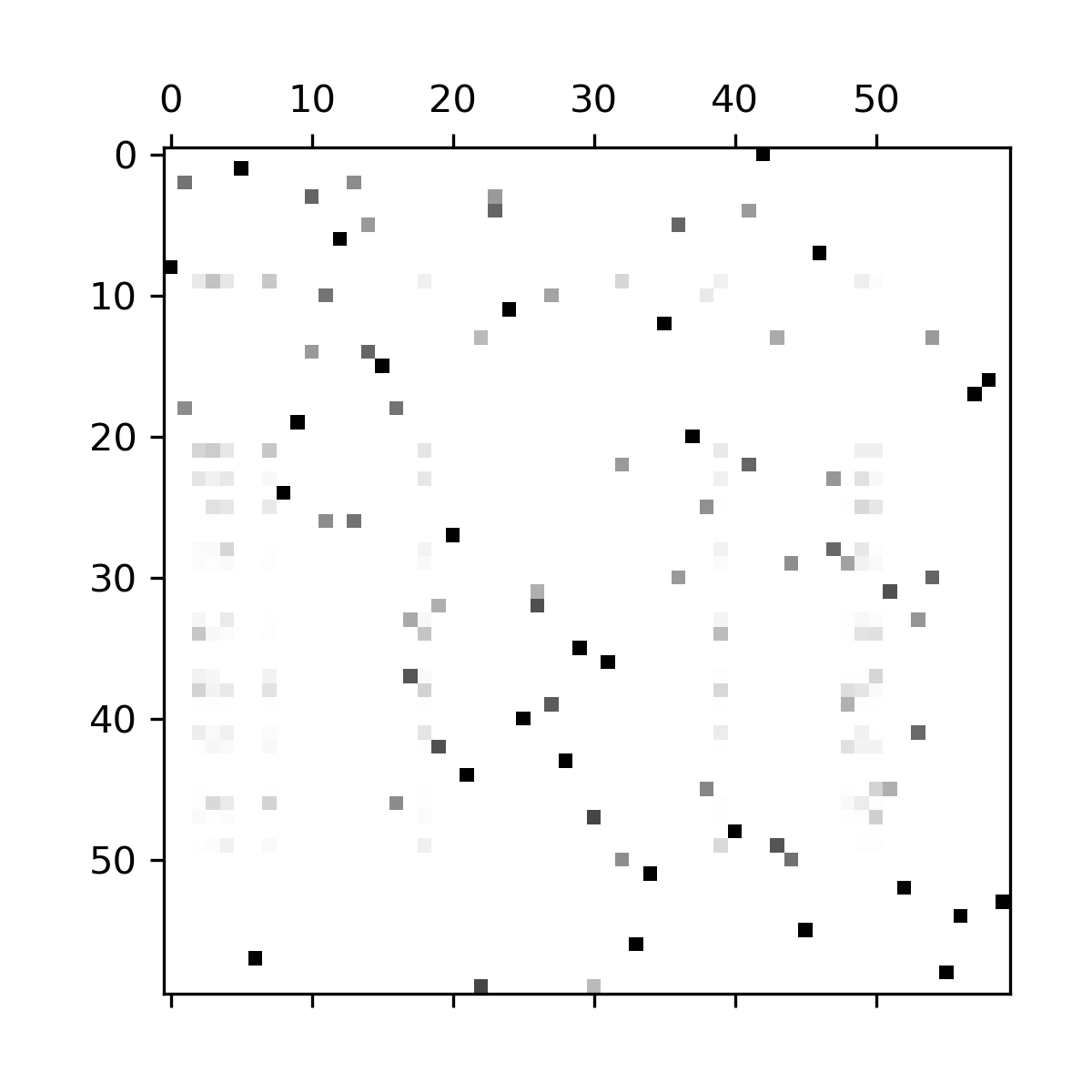}
\end{tabular}
\caption{Stochastic matrices of the first shuffle unit in ShuffleNet v1 (g=3) with relaxed shuffle before rounding at $\lambda = 0$ (left) and $\lambda = 10^{-5}$ (right). The relaxed shuffle is trained on CIFAR-10 dataset. The matrices are quite diffusive and not close to optimal permutation matrices when $\lambda$ is relatively small.}
\label{smat}
\end{center}
\end{figure}

\section{Conclusion}
We introduced a novel, exact and Lipschitz continuous relaxation for permutation and learning channel shuffling in ShuffleNet.
We showed through a regression problem of a 2-layer neural network with short cut that convex relaxation fails even with additional rounding while our relaxation is precise. 
We plan to extend our work to auto-shuffling of other LCNNs and hard permutation problems in the future.    

\section{Acknowledgement} 
The work was partially supported 
by NSF grant IIS-1632935.

\clearpage

\bibliographystyle{plain}
\bibliography{references}

\end{document}